\newcommand{\eps}{\epsilon}
\newcommand{\on}{\{-1,1\}}
\newcommand{\pr}{\mathop{\mathbf{Pr}}}
\newcommand{\E}{\mathop{\mathbf E}}
\newcommand{\R}{\mathbb{R}}
\newcommand{\D}{\mathcal{D}}
\newcommand{\ORR}{\mathsf{OR}}
\newcommand{\U}{\mathcal{U}}
\newcommand{\N}{\mathbb{N}}
\newcommand{\A}{\mathcal{A}}
\newcommand{\C}{\mathcal{C}}
\newcommand{\poly}{\mathsf{poly}}
\newcommand{\sign}{\mathsf{sign}}
\newcommand{\zo}{\{0, 1\}}
\newcommand{\zeto}[1]{\{0,\ldots,#1\}}
\renewcommand{\S}{\mathbb{S}}
\newcommand{\F}{\mathcal{F}}
\renewcommand{\P}{\mathcal{P}}
\newcommand{\eat}[1]{}
\newcommand{\cond}{\ |\ }
\newcommand{\alequn}[1]{\begin{align*} #1 \end{align*}}
\newtheorem{theorem}{Theorem}[section]
\newtheorem{lemma}[theorem]{Lemma}
\newtheorem{corollary}[theorem]{Corollary}
\newtheorem{fact}[theorem]{Fact}
\newtheorem{definition}[theorem]{Definition}
\newtheorem{problem}{Problem}
\title{Agnostic Learning of Disjunctions on Symmetric Distributions}
\author{Vitaly Feldman \\ IBM Research - Almaden
\and Pravesh Kothari\thanks{Work done while the author was at IBM Research - Almaden.}
\\ The University of Texas at Austin 
}
\begin{document}

\maketitle

\begin{abstract}
We consider the problem of approximating and learning disjunctions (or equivalently, conjunctions) on symmetric distributions over $\zo^n$. Symmetric distributions are distributions whose PDF is invariant under any permutation of the variables.
We prove that for every symmetric distribution $\D$, there exists a set of $n^{O(\log{(1/\epsilon)})}$ functions $\S$, such that for every disjunction $c$, there is function $p$, expressible as a linear combination of functions in $\S$, such that $p$ $\epsilon$-approximates $c$ in $\ell_1$ distance on $\D$ or $\E_{x \sim \D}[ |c(x)-p(x)|] \leq \epsilon$. This implies an agnostic learning algorithm for disjunctions on symmetric distributions that runs in time $n^{O( \log{(1/\epsilon)})}$. The best known previous bound is $n^{O(1/\epsilon^4)}$ and follows from approximation of the more general class of halfspaces \citep{Wim10}.
We also show that there exists a symmetric distribution $\D$, such that the minimum degree of a polynomial that $1/3$-approximates the disjunction of all $n$ variables in $\ell_1$ distance on $\D$ is $\Omega(\sqrt{n})$. Therefore the learning result above cannot be achieved via $\ell_1$-regression with a polynomial basis used in most other agnostic learning algorithms.

Our technique also gives a simple proof that for any product distribution $\D$ and every disjunction $c$, there exists a polynomial $p$ of degree $O(\log{(1/\epsilon)})$ such that $p$ $\epsilon$-approximates $c$ in $\ell_1$ distance on $\D$. This was first proved by \citet{BOW08} via a more involved argument.

\end{abstract}

\section{Introduction}
The goal of an agnostic learning algorithm for a concept class $\C$ is to produce, for any distribution on examples, a hypothesis $h$ whose error on a random example from the distribution is close to the best possible by a concept from $C$. This model reflects a common empirical approach to learning, where few or no assumptions are made on the process that generates the examples and a limited space of candidate hypothesis functions is searched in an attempt to find the best approximation to the given data.

Agnostic learning of disjunctions (or, equivalently, conjunctions) is a fundamental question in learning theory and a key step in learning algorithms for other concept classes such as DNF formulas and decision trees. Algorithms for this problem, such as the Set Covering Machine \citep{MarchandS02}, are also used in practical applications. There is no known efficient algorithm for the problem, in fact the fastest algorithm that does not make any distributional assumptions runs in $2^{\tilde{O}(\sqrt{n})}$ time \citep{KKMS05}. Polynomial-time learnability is only known when the examples are very close to being consistent with some disjunction \citep{AwasthiBS10}.

While the problem appears to be hard, strong hardness results are known only if the hypothesis is restricted to be a disjunction or a linear threshold function \citep{BenDavidEL:03,BshoutyBurroughs:06,FeldmanGKP09,FeldmanGRW:12}, or for learning using $\ell_1$-regression \citep{KS10}. Weaker, quasi-polynomial lower bounds are known assuming hardness of learning sparse parities with noise (see Section~\ref{sec:agnostic-learn}) and, very recently, hardness of refuting random SAT formulas \citep{DanielyS14}. It is also well-known that distribution-independent agnostic learning of disjunctions implies PAC learning of DNF expressions \citep{KearnsSS:94} (similar results for distribution specific-learning are discussed below). Finally, agnostic learning of disjunctions is known to be closely related to the problem of differentially-private release of answers to conjunctive queries \citep{GHRU11}.

We consider this problem with an additional assumption that example points are distributed according to a symmetric or a product distribution. Symmetric and product distributions are two incomparable classes of distributions that generalize the well-studied uniform distribution. Theoretical study of learning over symmetric distributions was first done by \citet{Wim10} who gave $n^{O(1/\epsilon^4)}$ time agnostic learning algorithm for the class of halfspaces.  Agnostic learning of disjunctions over symmetric distributions on $\zo^n$ also arises naturally in the well-studied problem of privately releasing answers to all short conjunction queries with low average error \citep{FeldmanKothari:14}.


\subsection{Our Results}
\label{sec:our-results}
We prove that disjunctions (and conjunctions) are learnable agnostically over any symmetric distribution in time $n^{O(\log(1/\eps))}$. This matches the well-known upper bound for the uniform distribution. Our proof is based on $\ell_1$-approximation of any disjunction by a linear combination of functions from a fixed set of functions. Such approximation directly gives an agnostic learning algorithm via $\ell_1$-regression based approach introduced by \citet{KKMS05}.

A natural and commonly used set of basis functions is the set of all monomials on $\zo^n$ of some bounded degree.  It is easy to see that on product distributions with constant bias, disjunctions longer than some constant multiple of $\log(1/\eps)$ are $\eps$-close to the constant function $1$. Therefore, polynomials of degree $O(\log(1/\eps))$ suffice for $\ell_1$ (or $\ell_2$) approximation on such distributions. This simple argument does not work for general product distributions. However it was shown by \citet{BOW08} that the same degree (up to a constant factor) still suffices in this case. Their argument is based on the analysis of noise sensitivity under product distributions and implies additional interesting results.

Interestingly, it turns out that low-degree polynomials cannot be used to obtain the same result for all symmetric distributions: we show that there exists a symmetric distribution for which disjunctions are no longer $\ell_1$-approximated by low-degree polynomials.
\begin{theorem}
\label{lem:hardsym-intro}
There exists a symmetric distribution $\D$ such that for $c = x_1 \vee x_2 \vee \cdots \vee x_n$, any polynomial $p$ that satisfies $\E_{x \sim \D}[ |c(x) -p(x)|] \leq 1/3$ is of degree $\Omega(\sqrt{n})$.
\end{theorem}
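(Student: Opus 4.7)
The plan is to reduce the lower bound to a one-dimensional approximation question and invoke (the dual form of) the classical Nisan--Szegedy $\Omega(\sqrt n)$ lower bound on the approximate degree of $\mathrm{OR}$.

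First, I symmetrize. Given a multilinear polynomial $p$ of degree $d$ on $\zo^n$, set $\tilde p(x) = \E_\pi[p(x_{\pi(1)},\dots,x_{\pi(n)})]$ with $\pi$ uniform in $S_n$. Then $\tilde p$ is symmetric and has degree at most $d$, and since $c$ is symmetric and $\D$ is permutation-invariant, Jensen's inequality gives
\alequn{
\E_{x \sim \D}[|c(x)-p(x)|] = \E_{x\sim\D}\E_\pi[|c(x)-p(\pi x)|] \geq \E_{x\sim\D}[|c(x)-\tilde p(x)|].
}
A symmetric multilinear polynomial on $\zo^n$ of degree $d$ has the form $\tilde p(x) = q(|x|)$ for a univariate polynomial $q$ of degree $\leq d$, because on $\zo^n$ the elementary symmetric polynomials satisfy $e_k(x) = \binom{|x|}{k}$, a polynomial in $|x|$ of degree $k$. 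Writing $\mu$ for the distribution of the Hamming weight $|x|$ under $\D$, the $\ell_1$ error equals $\E_{k\sim\mu}[|\1[k\geq 1] - q(k)|]$. Conversely every distribution $\mu$ on $\zeto{n}$ is realized by some symmetric $\D$ on $\zo^n$ (sample $k \sim \mu$, then a uniform $x$ of weight $k$). So it suffices to exhibit a $\mu$ under which every univariate $q$ of degree below $c\sqrt n$ satisfies $\E_{k\sim\mu}[|\1[k\geq 1] - q(k)|] > 1/3$.

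Next, I appeal to LP duality. For any fixed $\mu$ and degree bound $d$,
\alequn{
\min_{\deg q \leq d} \E_{k\sim\mu}[|\1[k\geq 1]-q(k)|] = \max_\psi \sum_{k=0}^n \psi(k)\cdot\1[k\geq 1],
}
where the maximum is over signed functions $\psi$ on $\zeto{n}$ satisfying $|\psi(k)|\leq \mu(k)$ for each $k$ and $\sum_k \psi(k) k^j = 0$ for $j=0,1,\dots,d$. Thus it suffices to construct a \emph{dual polynomial} for $\mathrm{OR}$: a signed measure $\psi$ on $\zeto{n}$ with $\|\psi\|_1=1$, orthogonal to all polynomials of degree $\leq c\sqrt n$, and with $\sum_{k\geq 1}\psi(k) > 1/3$; the distribution $\mu(k) := |\psi(k)|$ is then the desired hard distribution. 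Such a $\psi$ is the dual witness for the Nisan--Szegedy $\Omega(\sqrt n)$ lower bound on the $\ell_\infty$-approximate degree of $\mathrm{OR}_n$; it can be constructed from the equi-oscillation points of the $\ell_\infty$-optimal approximator of $\mathrm{OR}$, with alternating signs and Chebyshev-type magnitudes, or by rescaling the Chebyshev polynomial of degree $c\sqrt n$ to $[0,n]$.

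The main obstacle is producing this dual polynomial, which is classical in the approximate degree literature and can be either cited or constructed in a self-contained paragraph from Chebyshev extremal properties; symmetrization and LP duality are routine. Conceptually, the key observation is that an $\ell_\infty$ dual witness $\psi$ automatically yields an $\ell_1$ lower bound under the distribution $\mu = |\psi|$, converting the classical Nisan--Szegedy bound into the distributional hardness statement required by the theorem, and matching (up to constants) the $\sqrt n$ that one would expect from approximate-degree considerations.
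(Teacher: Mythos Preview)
Your proposal is correct and follows essentially the same route as the paper: symmetrize to reduce to a univariate approximation question, then take the dual witness $\psi$ for the $\ell_\infty$-approximate-degree lower bound of $\mathrm{OR}$ (the paper cites \citet{KS10} rather than Nisan--Szegedy, but this is the same $\Omega(\sqrt n)$ bound) and define the hard symmetric distribution via $\mu = |\psi|$. The paper's proof differs only in presentation, writing out the primal/dual LPs explicitly and then verifying by hand that $\E_{m\sim\rho}[|d(m)-f_p(m)|] \geq \sum_m \beta_m^* d(m) - \sum_m \beta_m^* f_p(m) = \epsilon^*$, which is exactly the weak-duality step you invoke.
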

To prove this, we consider the standard linear program \citep[see][]{KS10} to find the coefficients of a degree $r$ polynomial that minimizes pointwise error with the disjunction $c$. The key idea is to observe that an optimal point for the dual can be used to obtain a distribution on which the \textit{$\ell_1$ error} of the best fitting polynomial $p$ for $c$ is same as the value of minimum \textit{pointwise error} of any degree $r$ polynomial with respect to $c$. When $c$ is a symmetric function, one can further observe that the distribution so obtained is in fact symmetric.  Combined with the degree lower bound for uniform approximation by polynomials by \citet{KS10}, we obtain the result. The details of the proof appear in Section~\ref{sec:poly-lower}.

Our approximation for general symmetric distributions is based on a proof that for the special case of the uniform distribution on $S_r$ (the points from $\on^n$ with Hamming weight $r$), low-degree polynomials still work, namely, for any disjunction $c$, there is a polynomial $p$ of degree at most $O(\log{(1/\epsilon)})$ such that the $\ell_1$ error $\E_{x \sim S_r}[ |c(x)-p(x)|] \leq \epsilon$. 
\begin{theorem}
For $r \in \{0,\ldots,n\}$, let $S_r$ denote the set of points in $\zo^n$ that have exactly $r$ $1$'s and let $\D_r$ denote the uniform distribution on $S_r$. For every disjunction $c$ and $\eps > 0$, there exists a polynomial $p$ of degree at most $O(\log{(1/\epsilon)})$ such that $\E_{\D_r}[|c(x) - p(x)|] \leq \eps$.\label{monotone-sym}
\end{theorem}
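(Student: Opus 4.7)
The plan is to exhibit an explicit polynomial approximation whose error on $\D_r$ has a tractable closed form. By permutation symmetry of $\D_r$, I can assume without loss of generality that $c(x) = x_1 \vee \cdots \vee x_k = 1 - \prod_{i=1}^{k}(1 - x_i) = 1 - \sum_{j=0}^{k}(-1)^j e_j(x_1,\ldots,x_k)$, where $e_j$ is the $j$th elementary symmetric polynomial. The key observation is that for $x \in S_r$, setting $s := x_1 + \cdots + x_k$ gives $e_j(x_1,\ldots,x_k) = \binom{s}{j}$, so on $S_r$ both $c$ and any symmetric polynomial of this form depend only on $s$. Under $\D_r$, $s$ is hypergeometrically distributed with mean $\mu := rk/n$, and $c(x) = \mathbf{1}[s \geq 1]$.

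I will set $d = C \log(1/\epsilon)$ for a sufficiently large absolute constant $C$ and split into two cases by the size of $\mu$. If $\mu \geq \log(1/\epsilon)$, take $p \equiv 1$; then $\E_{\D_r}[|c - p|] = \Pr[s = 0] = \binom{n-k}{r}/\binom{n}{r} \leq (1 - k/n)^r \leq e^{-\mu} \leq \epsilon$. If instead $\mu < \log(1/\epsilon)$, take the truncation $p(x) = 1 - \sum_{j=0}^{d}(-1)^j e_j(x_1,\ldots,x_k)$, a multilinear polynomial of degree at most $d = O(\log(1/\epsilon))$.

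The crux of the truncation analysis is the combinatorial identity $\sum_{j=0}^{d}(-1)^j \binom{s}{j} = (-1)^d \binom{s-1}{d}$, provable in one line by Pascal-rule telescoping. Plugging it in yields the clean pointwise bound $|c(x) - p(x)| = \binom{s-1}{d}$ for $s \geq 1$ and $0$ for $s = 0$; in particular the error vanishes whenever $s \leq d$. Since $\binom{s-1}{d} \leq \binom{s}{d}$, I estimate $\E_{\D_r}[|c - p|] \leq \E[\binom{s}{d}] = \binom{k}{d} \prod_{i=0}^{d-1}\frac{r - i}{n - i} \leq (ek/d)^d (r/n)^d = (e\mu/d)^d$. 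Choosing $C$ large enough that $e\mu/d \leq 1/2$ and $2^{-d} \leq \epsilon$ (any $C \geq 2e$ works) finishes this case.

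The substantive step (rather than a true obstacle) is the choice of truncation whose tail collapses to a single elementary-symmetric term $e_d$; the identity above is exactly what makes this happen. Everything else reduces to two routine estimates: the lower tail $\Pr[s = 0] \leq e^{-\mu}$ on one side, and the $d$th factorial moment bound $\E[\binom{s}{d}] \leq (e\mu/d)^d$ for the hypergeometric distribution on the other.
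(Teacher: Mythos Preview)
Your argument for the monotone case is correct and is essentially the paper's own approach: the polynomial you use is, up to rewriting, exactly the paper's $p_f(x)=f\bigl(\sum_{i\le k} x_i\bigr)$ with $f(w)=1-\frac{1}{t!}\prod_{i=1}^t(i-w)$, and your error analysis via the closed-form hypergeometric factorial moment $\E\bigl[\binom{s}{d}\bigr]=\binom{k}{d}\prod_{i=0}^{d-1}\frac{r-i}{n-i}\le (e\mu/d)^d$ is in fact a bit cleaner than the paper's term-by-term estimate of $\sum_j \Pr[s=j]\binom{j}{t}^2$.

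There is, however, a genuine gap at the very first step. Permutation symmetry of $\D_r$ lets you reorder the variables, but it does \emph{not} let you replace negated literals by un-negated ones: $\D_r$ is not invariant under $x_i\mapsto 1-x_i$ (that map sends $S_r$ to $S_{n-r}$). So the ``without loss of generality $c=x_1\vee\cdots\vee x_k$'' reduction is unjustified, and your analysis as written covers only monotone disjunctions. For a general $c=\bigvee_{i\in A}x_i\vee\bigvee_{j\in B}\bar x_j$ you can still write $c=d(s)$ with $s=\sum_{i\in A}x_i+\sum_{j\in B}(1-x_j)$ and use the same truncation, but under $\D_r$ this $s$ is no longer hypergeometric, so neither the tail bound $\Pr[s=0]\le e^{-\mu}$ nor the factorial-moment identity you invoke is directly available. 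The paper handles the non-monotone case separately: it writes $c=c_1+c_2-c_1c_2$ with $c_1$ monotone and $c_2$ purely negated, observes that $c_2$ on $\D_r$ becomes a monotone disjunction on $\D_{n-r}$, and then controls the cross term $\E_{\D_r}\bigl[|(c_1-p_1)(c_2-p_2)|\bigr]$ via Cauchy--Schwarz. That last step requires an $\ell_2^2$ bound (not just $\ell_1$) in the monotone case, which your write-up does not establish.
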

This result can be easily converted to a basis for approximating disjunctions over arbitrary symmetric distributions. All we need is to partition the domain $\zo^n$ into layers as $\cup_{0 \leq r \leq n}S_r$ and
use a (different) polynomial for each layer. Formally, the basis now contains functions of the form $\mathrm{IND}(r) \cdot \chi$, where $\mathrm{IND}$ is the indicator function of being in layer of Hamming weight $r$ and $\chi$ is a monomial of degree $O(\log(1/\eps))$. We note that a related strategy, of constructing a collection of functions, one for each layer of the cube was used by \citet{Wim10} to give $n^{O(1/\epsilon^4)}$ time agnostic learning algorithm for the class of halfspaces on symmetric distributions. However, his proof technique is based on an involved use of representation theory of the symmetric group and is not related to ours.

Our proof technique also gives a simpler proof for the result of \citet{BOW08} that implies approximation of disjunction by low-degree polynomials on all product distributions.
\begin{theorem}\label{th:product-poly-intro}
For any disjunction $c$ and product distribution $\D$ on $\zo^n$, there is a polynomial $p$ of degree $O(\log{(1/\epsilon)})$ such that $\E_{x \sim \D}[|c(x)-p(x)|] \leq \epsilon.$  \label{th:product-poly}
\end{theorem}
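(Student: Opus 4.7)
My plan is to approximate the complement $1-c(x) = \prod_{i \in S}(1-x_i)$, where $S \subseteq [n]$ is the set of variables in the disjunction $c$. Write $\mu_i = \pr_\D[x_i=1]$ and $q = \prod_{i \in S}(1-\mu_i) = \pr_\D[c(x) = 0]$. If $q \leq \eps$, the constant polynomial $1$ already $\eps$-approximates $c$ in $\ell_1$, so assume $q > \eps$. Partition $S = H \cup L$ into the \emph{heavy} variables $H = \{i \in S : \mu_i \geq 1/2\}$ and the \emph{light} ones $L = S \setminus H$. The inequality $(1/2)^{|H|} \geq \prod_{i \in H}(1-\mu_i) \geq q > \eps$ gives $|H| \leq \log_2(1/\eps)$, so the factor $\prod_{i \in H}(1-x_i)$ is already a polynomial of degree $O(\log(1/\eps))$ and will be used exactly.

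For the light part I would pass to centered variables $\tilde x_i = x_i - \mu_i$, which are independent and mean zero under $\D$. Setting $\alpha_i = 1/(1-\mu_i)$ and $q_L = \prod_{i \in L}(1-\mu_i)$, write
\[
\prod_{i \in L}(1-x_i) \;=\; q_L \prod_{i \in L}\bigl(1 - \alpha_i \tilde x_i\bigr),
\]
expand the right side fully as a sum over subsets of $L$, and truncate at degree $d$:
\[
P_{L,d}(x) \;=\; q_L \sum_{T \subseteq L,\ |T|\leq d}(-1)^{|T|}\prod_{i \in T}\alpha_i\,\tilde x_i .
\]
Distinct monomials in the $\tilde x_i$'s are orthogonal under $\D$, and using $\E_\D[\tilde x_i^2] = \mu_i(1-\mu_i)$ together with $\alpha_i^2\mu_i(1-\mu_i) = \beta_i$, where $\beta_i := \mu_i/(1-\mu_i)$, a direct calculation gives
\[
\E_\D\!\left[\bigl(\textstyle\prod_{i \in L}(1-x_i) - P_{L,d}(x)\bigr)^2\right] \;=\; q_L^2 \sum_{T\subseteq L,\ |T|>d}\prod_{i \in T}\beta_i .
\]

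The main estimate is a standard Chernoff-type tail bound: for any $t \geq 1$,
\[
\sum_{|T|>d}\prod_{i \in T}\beta_i \;\leq\; t^{-d}\prod_{i \in L}(1+t\beta_i) \;\leq\; t^{-d} e^{tB_L}, \qquad B_L := \sum_{i \in L}\beta_i ,
\]
and the optimal choice $t = d/B_L$ (feasible when $d \geq B_L$) gives $(eB_L/d)^d$. Two elementary estimates control the parameters: $|H| = O(\log(1/\eps))$ as above, and, using $\mu_i < 1/2$ on $L$ together with $-\ln(1-\mu) \geq \mu$,
\[
B_L \;\leq\; 2\sum_{i \in L}\mu_i \;\leq\; 2\ln(1/q_L) \;\leq\; 2\ln(1/\eps).
\]
Picking $d = C\log(1/\eps)$ for a sufficiently large absolute constant $C$ then makes $(eB_L/d)^d \leq \eps^2$, so the $\ell_2$-error above is at most $\eps$ and, by Cauchy--Schwarz, the $\ell_1$-error is at most $\eps$. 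Taking
\[
P(x) \;:=\; 1 - \prod_{i \in H}(1-x_i)\cdot P_{L,d}(x) ,
\]
which has degree $|H|+d = O(\log(1/\eps))$, and using $\prod_{i \in H}(1-x_i) \in \{0,1\}$, yields $\E_\D[|c(x)-P(x)|] \leq \E_\D[|\prod_{i \in L}(1-x_i) - P_{L,d}(x)|] \leq \eps$. The main obstacle is really just identifying the right heavy/light split: the non-trivial case $q > \eps$ must simultaneously force $|H|$ small (for the exact representation) and $B_L$ small (for the Chernoff tail at degree $O(\log(1/\eps))$), and once this is arranged the rest is a routine moment computation.
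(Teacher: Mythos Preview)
Your argument is correct and takes a genuinely different route from the paper's. The paper does not use a heavy/light split or the orthogonal (centered-variable) expansion at all; instead it exploits that $c(x)=d(w(x))$ depends only on the Hamming weight $w(x)$, builds the single univariate polynomial $f(w)=1-\tfrac{1}{t!}\prod_{i=1}^t(i-w)$ that matches $d$ on $\{0,\dots,t\}$, and sets $p(x)=f(w(x))$. The error analysis then reduces to the elementary bound $\Pr_{\D}[w(x)=j]\le(\sum_i\mu_i)^j/j!$ together with $\sum_i\mu_i\le 2\ln(1/\eps)$ (from $\Pr[x=0^k]>\eps$), and a tail sum identical to the one already carried out in the proof of Theorem~\ref{monotone-sym}. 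What the paper's approach buys is that the approximating polynomial is \emph{symmetric}, so exactly the same construction and calculation are reused for the single-layer distributions $\D_r$ in the symmetric-distribution result; your orthogonality argument is perhaps more transparent from a Fourier-analytic standpoint and sidesteps the Hamming-weight distribution estimate, but it leans essentially on independence and would not carry over to the symmetric (non-product) setting. One minor omission: you implicitly assume $c$ is monotone when writing $1-c(x)=\prod_{i\in S}(1-x_i)$; as the paper notes in one line, the general case reduces to this by swapping $0\leftrightarrow 1$ on negated coordinates, which preserves the product structure.
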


\subsection{Applications}
Theorem \ref{monotone-sym} together with a standard application of $\ell_1$ regression \citep{KKMS05} yields an agnostic learning algorithm for the class of disjunctions running in time $n^{O(\log(1/\eps))}$.
\begin{corollary}
There is an algorithm that agnostically learns the class of disjunctions on arbitrary symmetric distributions on $\zo^n$ in time $n^{O( \log{(1/\epsilon)})}$. \label{agnostic-learning}
\end{corollary}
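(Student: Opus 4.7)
The plan is to derive the corollary by combining Theorem \ref{monotone-sym} with the $\ell_1$-regression framework of \citet{KKMS05}. The key observation, already foreshadowed in the introduction, is that any symmetric distribution $\D$ on $\zo^n$ decomposes as a convex combination of the uniform distributions $\D_r$ on the Hamming-weight layers $S_r$: writing $w_r := \pr_\D[x \in S_r]$, we have $\E_{x \sim \D}[f(x)] = \sum_{r=0}^n w_r \, \E_{x \sim \D_r}[f(x)]$ for every $f$. This reduces the global approximation problem to $n+1$ independent per-layer problems, one for each $\D_r$.

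First I would fix an arbitrary disjunction $c$ and apply Theorem \ref{monotone-sym} on each layer to obtain, for every $r \in \{0,\ldots,n\}$, a polynomial $p_r$ of degree $d = O(\log(1/\eps))$ satisfying $\E_{\D_r}[|c(x) - p_r(x)|] \leq \eps$. I would then define the piecewise hypothesis $p(x) := \sum_{r=0}^n \1_r(x)\, p_r(x)$, where $\1_r$ is the indicator of Hamming weight $r$. Since exactly one $\1_r(x)$ is nonzero at each point,
\alequn{\E_{x \sim \D}[|c(x) - p(x)|] = \sum_{r=0}^n w_r \cdot \E_{\D_r}[|c(x) - p_r(x)|] \leq \eps.}

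Crucially, $p$ is a linear combination of functions from the \emph{fixed} set (independent of $c$ and $\D$)
\alequn{\S := \{\, \1_r \cdot \chi_T : 0 \leq r \leq n,\ T \subseteq [n],\ |T| \leq d \,\},}
where $\chi_T(x) = \prod_{i \in T} x_i$. The cardinality of $\S$ is $(n+1)\cdot \sum_{j \leq d}\binom{n}{j} = n^{O(\log(1/\eps))}$. Feeding $\S$ into the $\ell_1$-regression algorithm of \citet{KKMS05} then returns, in time $\poly(|\S|, 1/\eps) = n^{O(\log(1/\eps))}$, a hypothesis whose error on $\D$ is within $O(\eps)$ of the error of the best disjunction; rescaling $\eps$ by a constant yields the stated bound.

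There is no real obstacle: given Theorem \ref{monotone-sym}, the corollary is essentially a bookkeeping step. The mild subtleties are that the layer indicators $\1_r$ depend only on the input and are hence legitimate basis elements for regression, and that while $p$ is not itself a low-degree polynomial in $x$ (consistent with Theorem \ref{lem:hardsym-intro}), the KKMS framework only requires a basis of efficiently evaluable functions rather than a monomial basis.
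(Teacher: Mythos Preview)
Your proposal is correct and follows essentially the same approach as the paper: decompose the symmetric distribution into Hamming-weight layers, invoke Theorem~\ref{monotone-sym} on each layer, and take as basis the functions $\mathrm{IND}(r)\cdot\chi_T$ with $|T|\le O(\log(1/\eps))$, then apply the $\ell_1$-regression framework (Theorem~\ref{th:lae-lp}). The paper treats this as an immediate consequence and gives no more detail than the sketch in Section~\ref{sec:our-results}, which your write-up matches.
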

This learning algorithm was extended to the class of all coverage functions in \citep{FeldmanKothari:14}, and then applied to the well-studied problem of privately releasing answers to all short conjunction queries with low average error.

It was shown by \citet{KalaiKM:09} and \citet{Feldman:10ab} that agnostic learning of conjunctions over a distribution $D$ in time $T(n,1/\eps)$ implies learning of DNF formulas with $s$ terms over $D$ in time $\poly(n,1/\eps) \cdot T(n,(4s/\eps))$. Further, under the same conditions distribution-specific agnostic boosting \citep{KalaiKanade:09,Feldman:10ab} implies that there exists an agnostic learning algorithm for decision trees with $s$ leaves running in time $\poly(n,1/\eps) \cdot T(n,s/\eps)$. Therefore we obtain quasi-polynomial learning algorithms for DNF formulas and decision trees over symmetric distributions.
\begin{corollary}
\begin{enumerate}
\item DNF formulas with $s$ terms are PAC learnable with error $\eps$ in time $n^{O(\log(s/\eps))}$ over all symmetric distributions;
\item Decision trees with $s$ leaves are agnostically learnable with excess error $\eps$ in time $n^{O(\log(s/\eps))}$ over all symmetric distributions.
\end{enumerate}
\end{corollary}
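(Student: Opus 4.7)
The plan is to derive both statements as direct consequences of Corollary~\ref{agnostic-learning} together with two black-box reductions from the literature that are explicitly distribution-preserving. Corollary~\ref{agnostic-learning} gives $T(n,1/\eps) = n^{O(\log(1/\eps))}$ for agnostic learning of conjunctions (by duality with disjunctions) over any fixed symmetric distribution $\D$, and both reductions produce a learner for the target class on the \emph{same} underlying distribution, so the symmetry assumption propagates for free.

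For part (1), I would invoke the reduction of \citet{KalaiKM:09} (refined by \citet{Feldman:10ab}): a distribution-specific agnostic learner for conjunctions over $\D$ with running time $T(n,1/\eps)$ yields a distribution-specific PAC learner for $s$-term DNF over $\D$ in time $\poly(n,1/\eps)\cdot T(n,4s/\eps)$. Substituting $T(n,1/\eps)=n^{O(\log(1/\eps))}$ gives $\poly(n,1/\eps)\cdot n^{O(\log(4s/\eps))}=n^{O(\log(s/\eps))}$, matching the claimed bound. I would briefly note that the reduction proceeds by viewing an $s$-term DNF as an $\eps/(2s)$-approximate OR of its terms (which is itself trivially agnostically learnable by the conjunction learner on a reweighted sample drawn from the same $\D$), so no distributional assumption beyond what is given for the conjunction learner is introduced.

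For part (2), I would apply distribution-specific agnostic boosting as developed by \citet{KalaiKanade:09} and \citet{Feldman:10ab}. The key black-box statement is that an agnostic learner for conjunctions over $\D$ running in time $T(n,1/\eps)$ yields an agnostic learner over $\D$ for any class that can be $\eps$-approximated in $\ell_1$ on $\D$ by a small decision tree over conjunctions; in particular for decision trees with $s$ leaves this gives time $\poly(n,1/\eps)\cdot T(n,s/\eps)=n^{O(\log(s/\eps))}$. Here the boosting step must be instantiated with a weak agnostic learner for conjunctions, which is exactly what Corollary~\ref{agnostic-learning} supplies, and the reweighted distributions the booster feeds the weak learner are still supported on the same domain under the same underlying $\D$, so Corollary~\ref{agnostic-learning} can be called on each stage (the relevant boosting theorems are phrased in this distribution-specific form).

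The main obstacle I anticipate is not a technical one but a bookkeeping one: making sure that each invoked reduction is truly distribution-specific, i.e.\ that the intermediate learning tasks it hands to the conjunction learner are over distributions for which Corollary~\ref{agnostic-learning} applies. Since $\ell_1$-regression based agnostic learners (such as the one underlying Corollary~\ref{agnostic-learning}) operate from labeled samples drawn from the fixed target distribution, and both the DNF and decision-tree reductions only reweight the labels (not resample the marginal), the marginal stays equal to $\D$ and hence symmetric throughout. With this observation in place, the bounds follow by plugging $T(n,1/\eps)=n^{O(\log(1/\eps))}$ into each reduction and simplifying the exponent.
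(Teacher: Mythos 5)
Your proposal is correct and follows essentially the same route as the paper: the paper likewise obtains both parts by plugging the bound $T(n,1/\eps)=n^{O(\log(1/\eps))}$ from Corollary~\ref{agnostic-learning} into the distribution-specific reduction of \citet{KalaiKM:09} and \citet{Feldman:10ab} for $s$-term DNF (time $\poly(n,1/\eps)\cdot T(n,4s/\eps)$) and into distribution-specific agnostic boosting \citep{KalaiKanade:09,Feldman:10ab} for decision trees (time $\poly(n,1/\eps)\cdot T(n,s/\eps)$). Your additional bookkeeping observation that the reductions keep the marginal equal to $\D$, so the symmetry hypothesis propagates, is exactly the point the paper relies on implicitly.
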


We also observe that any algorithm that agnostically learns the class of disjunction on the uniform distribution in time $n^{o(\log{(\frac{1}{\epsilon})})}$ would yield a faster algorithm for the notoriously hard problem of Learning Sparse Parities with Noise. This is implicit in prior work \citep{KKMS05,Feldman:12jcss} and we provide additional details in Section \ref{sec:agnostic-learn}.

\citet{DachmanFTWW:15} recently showed that $\ell_1$ approximation by polynomials is necessary and sufficient condition for agnostic learning over a product distribution (at least in the statistical query framework of \citet{Kearns:98}). Our agnostic learning algorithm (Theorem \ref{agnostic-learning}) and lower bound for polynomial approximation (Theorem \ref{lem:hardsym-intro}) demonstrate that this equivalence does not hold for non-product distributions.


\section{Preliminaries}
\label{sec:prelims}
We use $\zo^n$ to denote the $n$-dimensional Boolean hypercube. Let $[n]$ denote the set $\{1,2,\ldots,n\}$. For $S\subseteq [n]$, we denote by $\ORR_S: \zo^n \rightarrow \zo$, the monotone Boolean disjunction on variables with indices in $S$, that is, for any $x \in \zo^n$, $\ORR_S(x) = 0 \Leftrightarrow  \forall i \in S \ \ x_i = 0$.

One can define norms and errors with respect to any distribution $\D$ on $\zo^n$. Thus, for $f: \zo^n \rightarrow \R$, we write the $\ell_1$ and $\ell_2$ norms of $f$ as $\|f\|_1 = \E_{x \sim \D}[ |f(x)|] $ and $\|f\|_2 = \sqrt{\E[ f(x)^2]}$ respectively. The $\ell_1$ and $\ell_2$ error of $f$ with respect to $g$ are given by $\|f-g\|_1$ and $\|f-g\|_2$ respectively.

\subsection{Agnostic Learning}

The agnostic learning model is formally defined as follows \citep{Haussler:92,KearnsSS:94}.
\begin{definition}
\label{def:agnostic}
Let $\F$ be a class of Boolean functions and let $\D$ be any fixed distribution on $\zo^n$. For any distribution $\P$ over $\zo^n \times \zo$, let $\mbox{opt}(\P,\F)$ be defined as: $\mbox{opt}(\P,\F) =  \inf_{f \in \F} \E_{(x,y) \sim \P} [|y - f(x)|] .$ An algorithm $\A$, is said to agnostically learn $\F$ on $\D$ if for every {\em excess error} $\epsilon> 0$ and any distribution $\P$ on $\zo^n \times \zo$ such that the marginal of $\P$ on $\zo^n$ is $\D$, given access to random independent examples drawn from $\P$, with probability at least $\frac{2}{3}$, $\A$ outputs a hypothesis $h:\zo^n \rightarrow [0,1]$, such that $\E_{(x,y) \sim \P} [ |h(x)- y| ] \leq \mbox{opt}(\P, \F) + \epsilon.$
\end{definition}

It is easy to see that given a set of $t$ examples $\{(x^i,y^i)\}_{i\leq t}$ and a set of $m$ functions $\phi_1, \phi_2, \ldots, \phi_m$ finding coefficients $\alpha_1,\ldots,\alpha_m$ which minimize $$\sum_{i\leq t} \left| \sum_{ j \leq m} \alpha_j \phi_j (x^i) - y^i \right|$$ can be formulated as a linear program. This LP is referred to as Least-Absolute-Error (LAE) LP or Least-Absolute-Deviation LP, or $\ell_1$ linear regression. As observed by \citet{KKMS05}, $\ell_1$ linear regression gives a general technique for agnostic learning of Boolean functions.
\begin{theorem}
\label{th:lae-lp}
Let $\C$ be a class of Boolean functions, $\D$ be distribution on $\zo^n$ and $\phi_1, \phi_2, \ldots, \phi_m: \zo^n \rightarrow \R$ be a set of functions that can be evaluated in time polynomial in $n$. Assume that there exists $\Delta$
such that for each $f \in \C$, there exist reals $\alpha_1, \alpha_2, \ldots, \alpha_m$ such that $${\E_{x \sim \D} \left[  \left|\sum_{ i \leq m} \alpha_i \phi_i (x) - f(x)\right|\right] \leq \Delta}.$$
Then there is an algorithm that for every $\eps >0$ and any distribution $\P$ on $\zo^n \times \zo$ such that the marginal of $\P$ on $\zo^n$ is $\D$, given access to random independent examples drawn from $\P$, with probability at least $2/3$, outputs a function $h$ such that $$\E_{(x,y) \sim \P} [ |h(x)- y| ] \leq \Delta + \epsilon.$$ The algorithm uses $O(m/\eps^2)$ examples, runs in time polynomial in $n$, $m$, $1/\eps$ and returns a linear combination of $\phi_i$'s.
\end{theorem}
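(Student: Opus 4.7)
The plan is to instantiate the algorithm as the $\ell_1$ linear regression LP on $t$ i.i.d.\ examples from $\P$ and bound the LP solution's generalization error via uniform convergence. I would draw $t = O(m/\eps^2)$ samples $(x^1,y^1),\ldots,(x^t,y^t)$ from $\P$ and solve the LP for coefficients $\hat\alpha$ minimizing $\hat L(\alpha) := \frac{1}{t}\sum_i |\sum_j \alpha_j \phi_j(x^i) - y^i|$, restricting $\alpha$ to a sufficiently large bounded box so that the hypothesis class has bounded statistical complexity and the coefficient vectors guaranteed by the approximation hypothesis remain feasible. As noted in the excerpt, this LP is solvable in time $\poly(n, m, t)$.

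The core step is uniform convergence. Let $L(\alpha) := \E_{(x,y) \sim \P}[|\sum_j \alpha_j \phi_j(x) - y|]$. The loss class $\{(x,y) \mapsto |\sum_j \alpha_j \phi_j(x) - y|\}$ is an absolute-value composition of an $m$-dimensional linear family, so it has pseudo-dimension $O(m)$. A standard generalization bound (via Rademacher complexity or Haussler's argument for bounded losses) implies that with $t = O(m/\eps^2)$ samples, with probability at least $2/3$ over the draw of the sample, $\sup_\alpha |L(\alpha) - \hat L(\alpha)| \leq \eps/2$. I would condition on this event henceforth.

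The approximation hypothesis of the theorem supplies a coefficient vector $\alpha^*$ (in the feasible box, by our choice of range) whose associated linear combination has $\ell_1$ approximation error at most $\Delta$, giving $L(\alpha^*) \leq \Delta$. Combining optimality of $\hat\alpha$ for $\hat L$ with the uniform convergence bound,
\[
L(\hat\alpha) \;\leq\; \hat L(\hat\alpha) + \tfrac{\eps}{2} \;\leq\; \hat L(\alpha^*) + \tfrac{\eps}{2} \;\leq\; L(\alpha^*) + \eps \;\leq\; \Delta + \eps.
\]
The output hypothesis is $h(x) := \min\{1, \max\{0, \sum_j \hat\alpha_j \phi_j(x)\}\}$; since $y \in \zo$, clipping into $[0,1]$ cannot increase the absolute error against $y$, so $\E_{(x,y) \sim \P}[|h(x)-y|] \leq L(\hat\alpha) \leq \Delta + \eps$, which is the stated bound.

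The main obstacle is the uniform convergence step, which requires carefully bounding the effective complexity of an infinite real-valued hypothesis class of linear combinations of $\phi_j$'s and calibrating the sample size to $O(m/\eps^2)$. The remaining ingredients (LP reformulation, optimality of $\hat\alpha$, and clipping into $[0,1]$) are elementary.
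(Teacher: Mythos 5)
The paper does not actually prove Theorem~\ref{th:lae-lp}; it imports it from \citet{KKMS05}. Your outline---draw $O(m/\eps^2)$ samples, solve the $\ell_1$-regression LP, invoke uniform convergence over the linear class, and clip the output into $[0,1]$---is exactly the route of the cited proof, and the final chain $L(\hat\alpha)\le \hat L(\hat\alpha)+\eps/2\le \hat L(\alpha^*)+\eps/2\le L(\alpha^*)+\eps\le\Delta+\eps$ together with the observation that clipping against a $\zo$-valued label cannot increase the loss is the right skeleton.

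The gap is in the uniform convergence step, which you name as the main obstacle but then treat as routine. A pseudo-dimension or Rademacher bound of the form ``$O(m/\eps^2)$ samples give $\sup_\alpha|L(\alpha)-\hat L(\alpha)|\le\eps/2$'' requires the loss to be uniformly bounded by a constant; for losses ranging in $[0,B]$ the sample complexity scales as $O(B^2 m/\eps^2)$. Restricting $\alpha$ to a ``sufficiently large box'' does not give $B=O(1)$: the box must contain $\alpha^*$, and the hypothesis of the theorem places no bound on $\|\alpha^*\|$ or on $\max_x\bigl|\sum_j\alpha^*_j\phi_j(x)\bigr|$. The standard repair is to clip the linear combinations into $[0,1]$ \emph{before} measuring the loss (clipping is a fixed monotone map, so it preserves the $O(m)$ pseudo-dimension and makes the loss $[0,1]$-valued), but this breaks your middle inequality: the LP minimizes the unclipped empirical loss, the clipped function achieving error $\le\Delta$ is no longer a linear combination the LP competes with, and the unclipped empirical loss $\hat L(\alpha^*)$ need not concentrate around $L(\alpha^*)$ at the claimed rate for the same boundedness reason. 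Handling this comparison point (via relative-deviation bounds, coefficient truncation, or a validation stage) is where the real content of the argument in \citet{KKMS05} lies, and your write-up would need to supply it to justify the stated $O(m/\eps^2)$ sample bound.
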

The output of this LP is not necessarily a Boolean function but can be converted to a Boolean function with disagreement error of $\Delta + 2\eps$ using $``h(x) \geq \theta"$ function as a hypothesis for an appropriately chosen $\theta$ \citep{KKMS05}.
\newcommand{\q}{\mathbf{q}}
\section{$\ell_1$ Approximation on Symmetric Distributions} \label{sec:l1approx}
In this section, we show how to approximate the class of all disjunctions on any symmetric distribution by a linear combination of a small set of basis functions.

As discussed above, polynomials of degree $O(\log{(1/\epsilon)})$ can $\eps$-approximate any disjunction in $\ell_1$ distance on any product distribution. This is equivalent to using low-degree monomials as basis functions. We first show that this basis would not suffice for approximating disjunctions on symmetric distributions. Indeed, we construct a symmetric distribution on $\zo^n$, on which, any polynomial that approximates the monotone disjunction $c = x_1 \vee x_2 \vee \ldots \vee x_n$ within $\ell_1$ error of $1/3$ must be of degree $\Omega(\sqrt{n})$.
\subsection{Lower Bound on $\ell_1$ Approximation by Low-Degree Polynomials}
\label{sec:poly-lower}
In this section we give the proof of Theorem \ref{lem:hardsym-intro}.
\begin{proof}[Proof of Theorem \ref{lem:hardsym-intro}]
Let $d:[n] \rightarrow \zo$ be the predicate corresponding to the disjunction $x_1 \vee x_2 \vee \ldots \vee x_n$, that is, $d(0) = 0$ and $d(i) = 1$ for each $i > 0$.

 Consider a natural linear program to find a univariate polynomial $f$ of degree at most $d$ such that $\|d-f\|_{\infty} = \max_{0 \leq i \leq n} |d(i) - f(i)| $ is minimized. This program (and its dual) often comes up in proving polynomial degree lower bounds for various function classes \citep[for example,][]{KS10}.

\begin{align*}
& \min  \text{   } \epsilon & \text{   }\\
s.t. \text{   } &\epsilon \geq |d(m) - \sum_{i = 0} ^{r} \alpha_i \cdot m^i|& \forall\text{  } m \in \zeto{n}\\
&\alpha_i \in \R \text{   } &\forall \text{  } i\in \zeto{r}
\end{align*}

If $\{\alpha_0, \alpha_1, \ldots, \alpha_n\}$ is a solution for the program above that has value $\eps$ then $f(m) = \sum_{i = 0}^r \alpha_i m^i$ is a degree $r$ polynomial that approximates $d$ within an error of at most $\epsilon$ at every point in $\zeto{n}$. \citet{KS10} show that there exists an $r^* = \Theta(\sqrt{n})$, such that the optimal value of the program above for $r = r^*$ is $\epsilon^* \geq 1/3$. Standard manipulations \citep[see][]{KS10} can be used to produce the dual of the program.
\begin{align*}
& \max  \text{   }\sum_{m=0}^n \beta_m \cdot d(m) &\\
s.t. \text{  } &\sum_{m=0}^n \beta_m \cdot m^i = 0 \text{   }& \forall \text{  } i\in \zeto{r}\\
 &\sum_{m=0}^n |\beta_m| \leq 1&\\
& \beta_m \in \R \text{  }& \forall \text{   } m \in \zeto{n}\\
\end{align*}
Let $\beta^* = \{ \beta^*_m\}_{m \in \zeto{n}} $ denote an optimal solution for the dual program with $r = r^*$. Then, by strong duality, the value of the dual is also $\epsilon^*$. Observe that $\sum_{m=0}^n |\beta^*_m|  = 1$, since otherwise we can scale up all the $\beta^*_m$ by the same factor and increase the value of the program while still satisfying the constraints.

Let $\rho:\zeto{n} \rightarrow [0,1]$ be defined by $\rho(m) = |\beta^*_m|$. Then $\rho$ can be viewed as a density function of a distribution on $\zeto{n}$ and we use it to define a symmetric distribution $\D$ on $\on^n$ as follows: $\D(x) = \rho(w(x))/{n \choose w(x)}$, where $w(x) = \sum_{i=1}^{n} x_i$ is the Hamming weight of point $x$. We now show that any polynomial $p$ of degree $r^*$ satisfies $\E_{x \sim \D}[ |c(x) - p(x)|] \geq 1/3$.

We now extract a univariate polynomial $f_p$ that approximates $d$ on the distribution with the density function $\rho$ using $p$. Let $p_{avg}:\on^n \rightarrow \R$ be obtained by averaging $p$ over every layer. That is, $p_{avg}(x) = \E_{z \sim \D_{w(x)}}[ p(z)]$, where $w(x)$ denotes the Hamming weight of $x$. It is easy to check that since $c$ is symmetric, $p_{avg}$ is at least as close to $c$ as $p$ in $\ell_1$ distance.

Further, $p_{avg}$ is a symmetric function computed by a multivariate polynomial of degree at most $r^*$ on $\zo^n$. Thus, the function $f_p(m)$ that gives the value of $p_{avg}$ on points of Hamming weight $m$ can be computed by a univariate polynomial of degree $r^*$. Further, $$\E_{x\sim \D}[|c(x)-p(x)|] \geq \E_{x\sim \D}[|c(x)-p_{avg}(x)|] = \E_{m \sim \rho}[ |d(m) - f_p(m)|].$$

Let us now estimate the error of $f_p$ w.r.t $d$ on the distribution $\rho$. Using the fact that $f_p$ is of degree at most $r^*$ and thus $\sum_{m = 0}^n f_p(m) \cdot \beta_m = 0$ (enforced by the dual constraints), we have:
\begin{align*}
\E_{m \sim \rho}[ |d(m) - f_p(m)|] &\geq \E_{m \sim \rho}[ (d(m) - f_p(m)) \cdot \sign ( \beta^*_m)]\\
&= \sum_{m = 0}^n d(m) \cdot \beta^*_m - \sum_{m = 0}^n f_p(m) \cdot \beta^*_m\\
& = \epsilon^* - 0 = \epsilon^* \geq 1/3.
\end{align*}
Thus, the degree of any polynomial that approximates $c$ on the distribution $\D$ with error of at most $1/3$  is $\Omega(\sqrt{n})$.
\end{proof}

\subsection{Upper Bound}
\label{sec:symmetric}

In this section, we describe how to approximate disjunctions on any symmetric distribution by using a linear combination of functions from a set of small size. Recall that $S_r$ denotes the set of all points from $\zo^n$ with weight $r$.

As we have seen above, symmetric distributions can behave very differently when compared to (constant bounded) product distributions. However, for the special case of the uniform distribution on $S_r$, denoted by $\D_r$, we show that for every disjunction $c$, there is a polynomial of degree $O(\log{(1/\epsilon)})$ that $\eps$-approximates it in $\ell_1$ distance on $\D_r$. As described in Section \ref{sec:our-results}, one can stitch together polynomial approximations on each $S_r$ to build a set of basis functions $\S$ such that every disjunction is well approximated by some linear combination of functions in $\S$. Thus, our goal is now reduced to constructing approximating polynomials on $\D_r$.



\eat{
\begin{lemma}
For any $0 \leq r \leq n$ and any disjunction $c$ there is a polynomial $p$ of degree at most $O( \log{(1/\epsilon)})$ such that
$\E_{\D_r}[|c(x) - p(x)|] \leq \eps$.
\label{monotone-sym}
\end{lemma}}
\begin{proof}[Proof of Theorem \ref{monotone-sym}]
We first assume that $c$ is monotone and without loss of generality $c=x_1 \vee \cdots \vee x_k$. We will also prove a slightly stronger claim that $\E_{\D_r}[|c(x) - p(x)|] \leq \E_{\D_r}[(c(x) - p(x))^2] \leq \eps$ in this case. Let $d:\zeto{k} \rightarrow \zo$ be the predicate associated with the disjunction, that is $d(i)=1$ whenever $i\geq 1$. Note that $c(x) = d\left(\sum_{i\in [k]} x_i\right)$. Therefore our goal is to find a univariate polynomial $f$ that approximates $d$ and then substitute $p_f(x) = f\left(\sum_{i\in [k]}x_i\right)$. This substitution preserves the total degree of the polynomial.
 We break our construction into several cases based on the relative magnitudes of $r,k$ and $\epsilon$.

If $k \leq 2 \ln{(1/\epsilon)}$, then the univariate polynomial that exactly computes the predicate $d$ satisfies the requirements. Thus assume that $k > 2 \ln(1/\epsilon)$.  If $r > n-k$, then, $c$ always takes the value $1$ on $S_r$ and thus the constant polynomial $1$ achieves zero error. If on the other hand, if $r \geq (n/k)\ln{(1/\epsilon)}$, then, $$\pr_{x \sim \D_r}[ c(x) = 0] = \frac{{{n-k} \choose r}}{{n \choose r}} = \prod_{i = 0}^{r-1} \left(1-\frac{k}{n-i}\right) \leq (1-k/n)^r \leq e^{-kr/n} \leq \epsilon.$$ In this case, the constant polynomial $1$ achieves an $\ell_2^2$ error of at most $\pr_{x \sim \D_r}[ c(x) = 0] \cdot 1 \leq \epsilon$.
Finally, observe that $r \leq (n/k) \ln{(1/\epsilon)}$ and $k > 2 \ln(1/\epsilon)$ implies $r \leq n/2$.  Thus, for the remaining part of the proof, assume that $r < \min \{n-k, (n/k) \ln{(1/\epsilon), n/2}\}$.

Consider the univariate polynomial $f:\zeto{k} \rightarrow \R$ of degree $t$ (for some $t$ to be chosen later) that computes the predicate $d$ exactly on $\zeto{t}$. This polynomial is given by $$f(w) = 1 - \frac{1}{t!} \prod_{i=1}^t (i-w)  =  \left\{ \begin{subarray}[
							1 - {w \choose t} \text{ for $w > t$}\\
                                                                         1                  \text{ for } 0 < w \leq t\\
                                                                          0                  \text{ for } w = 0\\
                                                                          \end{subarray}\right. $$

Let $$\delta_j = \pr_{x \sim \D_r}[ |\{ i \cond x_i = 1\}| = j] = \frac{ {{n-k} \choose {r-j} } \cdot {k \choose j} }{{n \choose r}}.$$ The $\ell_2^2$ error of $p_f(x)$ on $c$ satisfies,  $$||p_f-c||_2^2 = \E_{x \sim \D_r}[(c(x) - p_f(x))^2] = \sum_{j = t+1}^k \delta_j \cdot { {j \choose t}}^2 .$$ We denote the RHS of this equality by $\|d-f\|^2_2$.

We first upper bound $\delta_j$ as follows:
\begin{align*}\delta_j = \frac{{{n-k} \choose {r-j} } \cdot {k \choose j} }{{n \choose r} } &= \frac{(n-k)!}{(n-k-r+j)!(r-j)!} \cdot \frac{k!}{(k-j)!j!} \cdot \frac{(n-r)! r!}{n!}\\
&= \frac{1}{j!} \cdot \frac{r!}{(r-j)!} \cdot \frac{k!}{(k-j)!} \cdot \frac{(n-r)!}{n!} \cdot \frac{(n-k)!}{(n-k-r+j)!}\\
&\leq \frac{1}{j!} \cdot (rk)^j \cdot \frac{(n-k) \cdot (n-k-1) \cdots (n-k-r+j+1)}{n \cdot (n-1) \cdots (n-r+1)}\\
&\leq \frac{1}{j!} \cdot (n \ln{(1/\epsilon)})^j \cdot \frac{1}{(n-r+j) \cdot (n-r+j-1) \cdots (n-r+1)},
\end{align*}
where, in the second to last inequality, we used that $r <n/k \ln {(1/\epsilon)}$ to conclude that $rk \leq (n \ln{(1/\epsilon)})$.
Now, $r < n/2$ and thus $(n-r+1) > n/2$. Therefore,
\begin{align*}
\delta_j &\leq \frac{2^j \cdot (n\ln{(1/\epsilon)})^j}{n^j \cdot j!} = \frac{(2\ln{(1/\epsilon)})^j}{j!},
\end{align*}

and thus: $$\|d-f\|^2_2\leq  \sum_{j = t+1}^k {j \choose t}^2 \frac{(2 \ln{(1/\epsilon)})^j}{j!}.$$

Set $t = 8e^2 \ln{(1/\epsilon)} $. Using $j! > (j/e)^j > (t/e)^j$ for every $j \geq t+1$, we obtain:

\begin{align}
\|d-f\|_2^2 &\leq \sum_{j = t+1}^k 2^{2j}  \cdot \left( \frac{2 \ln{(1/\epsilon)}}{8e \ln{(1/\epsilon)}} \right)^j \leq \epsilon \cdot \sum_{j = t+1}^{\infty} 1/e^j \leq \epsilon. \label{eq:final-bound}
\end{align}

To see that $\E_{\D_r}[|c(x) - p(x)|] \leq \E_{\D_r}[(c(x) - p(x))^2]$ we note that in all cases and for all $x$, $|p(x)-c(x)|$ is either $0$ or $\geq 1$. This completes the proof of the monotone case.

We next consider the more general case when $c = x_1 \vee x_2 \vee \ldots \vee x_{k_1} \vee \bar{x}_{k_1+1} \vee  \bar{x}_{k_1+2} \vee \ldots \vee  \bar{x}_{k_1+k_2}$. Let $c_1 = x_1 \vee x_2 \vee \ldots \vee x_{k_1}$ and $c_2 =  \bar{x}_{k_1+1} \vee  \bar{x}_{k_1+2} \vee \ldots \vee  \bar{x}_{k_1+k_2}$ and $k = k_1 + k_2$. Observe that $c = 1 - (1-c_1) \cdot (1-c_2) = c_1+c_2-c_1c_2$.

Let $p_1$ be a polynomial of degree $O(\log{(1/\epsilon)})$ such that $\|c_1 - p_1\|_1 \leq \|c_1 - p_1\|_2^2 \leq \epsilon/3$. Note that if we swap 0 and 1 in $\zo^n$ then $c_2$ will be equal to a monotone disjunction $\bar{c}_2 = x_{k_1+1} \vee  x_{k_1+2} \vee \ldots \vee  x_{k_1+k_2}$ and $\D_r$ will become $\D_{n-r}$. Therefore by the argument for the monotone case, there exists a polynomial $\bar{p}_2$ of degree $O(\log{(1/\epsilon)})$ such that $\|\bar{c}_2 - \bar{p}_2\|_1 \leq \epsilon/3$. By renaming the variables back we will obtain a polynomial $p_2$ of degree $O(\log{(1/\epsilon)})$ such that $\|c_2 - p_2\|_1 \leq \|c_2 - p_2\|_2^2 \leq \epsilon/3$.
Now let $p  = p_1+p_2 - p_1p_2$. Clearly the degree of $p$ is $O( \log{(1/\epsilon)})$. We now show that $\|c-p\|_1\leq \epsilon$:
\alequn{
\E_{x\sim \D_r}[|c(x)-p(x)|] &= \E_{x\sim \D_r}[|(1-c(x))-(1-p(x))|] \\
& = \E_{x\sim \D_r}[|(1-c_1)(1-c_2)-(1-p_1)(1-p_2)|]\\
&= \E_{x\sim \D_r}[|(1-c_1)(p_2 -c_2) + (1-c_2)(p_1-c_1) - (c_1-p_1)(c_2-p_2)|] \\
&\leq \E_{x\sim \D_r}[|(1-c_1)(p_2 -c_2)|] + \E_{x\sim \D_r}[|(1-c_2)(p_1-c_1)|] +\E_{x\sim \D_r}[|(c_1-p_1)(c_2-p_2)|] \\
&\leq \E_{x\sim \D_r}[|p_2 -c_2|] + \E_{x\sim \D_r}[|p_1-c_1|] + \sqrt{\E_{x\sim \D_r}[(c_1-p_1)^2]\E_{x\sim \D_r}[(c_2-p_2)^2]} \\
& \leq \eps/3 +\eps/3+\eps/3 = \eps.
}
\end{proof}

\section{Polynomial Approximation on Product Distributions}
\label{sec:product}
In this section, we show that for every product distribution $\D = \prod_{i \in [n]} \D_i$, every $\epsilon > 0$ and every disjunction (or conjunction) $c$ of length $k$, there exists a polynomial $p: \zo^n \rightarrow \R$ of degree {$O(\log{(1/\epsilon)})$} such that $p$ $\eps$-approximates $c$ in $\ell_1$ distance on $\D$.

\eat{
\begin{lemma}
Let $\D = \prod_{i \in [n]} \D_i$ be any product distribution on $\zo^n$ with $\pr_{x_i \sim \D^i}[x_i = 1] = \mu_i $ for each $i \in [k]$ and let $c = x_1 \vee x_2 \vee \ldots \vee x_k$ be the monotone disjunction on all the variables. For any $ \epsilon > 0$, there exists a polynomial $f: \zo^k \rightarrow \R$ of degree $O( \log{(1/\epsilon)})$ such that $$\|c-f\|_1 = \E_{x \in \D} |c(x) - f(x)| \leq \epsilon.$$
\label{lem:product-poly}
\end{lemma}}

\begin{proof}[Proof of Theorem \ref{th:product-poly-intro}]
First, we note that without loss of generality we can assume that the disjunction $c$ is equal to $x_1 \vee x_2 \vee \cdots \vee x_k$ for some $k\in[n]$. We can assume monotonicity since we can convert negated variables to un-negated variables by swapping the roles of $0$ and $1$ for that variable. The obtained distribution will remain product after this operation. Further we can assume that $k=n$ since variables with indices $i > k$ do not affect probabilities of variables with indices $\leq k$ or the value of $c(x)$.

We first note that we can assume that $\pr_{x \sim \D} [x = 0^k] > \epsilon$ since, otherwise, the constant polynomial $1$ gives the desired approximation. Let $\mu_i = \pr_{x_i \sim \D^i}[x_i = 1]$. Since $c$ is a symmetric function, its value at any $x \in \zo^k$ depends only on the Hamming weight of $x$ that we denote by $w(x)$. Thus, we can equivalently work with the univariate predicate $d:\{0,1,\ldots,k\} \rightarrow \zo$, where $d(i) = 1$ for $i >0$ and $d(0) = 0$.

As in the proof of Theorem \ref{monotone-sym}, we will approximate $d$ by a univariate polynomial $f$ and then use the polynomial $p_f(x) = f(w(x))$ to approximate $c$. 

Let $f:\{0,1, \ldots, k\} \rightarrow \R$ be the univariate polynomial of degree $t$ that matches $d$ on all points in $\{0,1, \ldots, t\}$. Thus, $$f(w) = 1 - \frac{1}{t!} \cdot \prod_{i=1}^t (w-i) = \left\{ \begin{subarray}[
							1 - {w \choose t} \text{ for $w > t$}\\
                                                                          1                  \text{ for } 0 < w \leq t\\
                                                                          0                  \text{ for } w= 0\\
                                                                          \end{subarray}\right. $$

We have,
$$\E_{x \sim \D_r}[(c(x) - p_f(x))^2] = \sum_{j = 0}^{k} \pr_{x \sim \D}[ w(x) = j] \cdot |d(j)-f(j)|$$
and we denote the RHS of this equation by $\|d-f\|_1$.

Then:
\begin{align}
\|d-f\|_1 &= \sum_{j = t+1}^k \pr_{ \D} [ w(x) = j] \cdot |1-f(j)| \nonumber\\
&= \sum_{j = t+1}^k \pr_{ \D} [ w(x) = j] \cdot {j \choose t} . \label{error-exp}
\end{align}

Let us now estimate $\pr_{\D}[w(x) = j]$.
\begin{align*}
\pr_{\D}[ w(x)= j] & = \sum_{S \subseteq [n]\text{, } |S| = j} \prod_{i \in S} \mu_i \cdot \prod_{i \notin S} (1-\mu_i ) \\
&\leq \sum_{S \subseteq [n]\text{, } |S| = j} \prod_{i \in S} \mu_i \\
\end{align*}

Observe that in the expansion of $(\sum_{i=1}^k \mu_i)^j$, the term $\prod_{i \in S} \mu_i$ occurs exactly $j!$ times. Thus, $$ \sum_{S \subseteq [n]\text{, } |S| = j} \prod_{i \in S} \mu_i    \leq \frac{( \sum_{i=1}^k \mu_i)^j }{j!}.$$

Set $\mu_{avg} = \frac{1}{k} \sum_{i=1}^k \mu_i$. We have: $$\epsilon \leq \pr_{x \sim \D} [ x = 0^k] = \prod_{i=1}^k (1-\mu_i) \leq \left(1 - \frac{1}{k} \cdot \sum_{i=1}^k \mu_i\right)^k = (1-\mu_{avg})^k.$$

Thus, $\mu_{avg} = c/k$ for some $c \leq 2\ln{(1/\epsilon)}$ whenever $k \geq k_0$ where $k_0$ is some universal constant. In what follows, assume that $k \geq k_0$. (Otherwise, we can use the polynomial of degree equal to $k$ that exactly computes the predicate $d$ on all points).

We are now ready to upper bound the error $\|d-f\|_1$.  From Equation \eqref{error-exp}, we have:
\begin{align*}
\|d-f\|_1 &= \sum_{j = t+1}^k \pr_{ \D} [ w(x) = j] \cdot {j \choose t} \leq \sum_{j = t+1}^k \frac{( \sum_{i=1}^k \mu_i)^j }{j!} \cdot {j \choose t}\\
& \leq  \sum_{j = t+1}^k {j \choose t}  \cdot \frac{(2\ln(1/\epsilon) )^j }{j!}
\end{align*}
Setting $t = 4e^2 \ln {(1/\epsilon)}$ and using the calculation from Equation \eqref{eq:final-bound} in the proof of Thm.~\ref{monotone-sym}, we obtain that the error $\|d-f\|_1 \leq \epsilon$.
\end{proof}

\section{Agnostic Learning of Disjunctions}
\label{sec:agnostic-learn}
Combining Thm.~\ref{th:lae-lp} with the results of the previous section (and the discussion in Section \ref{sec:our-results}), we obtain an agnostic learning algorithm for the class of all disjunctions on product and symmetric distributions running in time $n^{O(\log{(1/\epsilon)})}$.

\begin{corollary}[Cor.~\ref{agnostic-learning}, restated]
There is an algorithm that agnostically learns the class of disjunctions on any product or symmetric distribution on $\zo^n$ with excess error of at most $\epsilon$ in time $n^{O( \log{(1/\epsilon)})}$.
\end{corollary}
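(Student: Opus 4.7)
The plan is to combine the $\ell_1$-regression reduction of Theorem~\ref{th:lae-lp} with the approximation theorems already established, handling the product and symmetric cases separately. In each case the task reduces to exhibiting a collection of $m = n^{O(\log(1/\eps))}$ polynomially-evaluable basis functions $\phi_1,\ldots,\phi_m : \zo^n \to \R$ with the property that every disjunction $c$ is $\eps/2$-approximated in $\ell_1$ on $\D$ by some linear combination of the $\phi_i$. Invoking Theorem~\ref{th:lae-lp} with $\Delta = \eps/2$ and accuracy parameter $\eps/2$ then produces, in time $\poly(n, m, 1/\eps) = n^{O(\log(1/\eps))}$, a hypothesis whose excess $\ell_1$ error over the best disjunction is at most $\eps$, as required by Definition~\ref{def:agnostic}.

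For the product case I would take $\phi_1,\ldots,\phi_m$ to be all monomials on $\zo^n$ of degree at most $t = O(\log(1/\eps))$. There are $\sum_{i \leq t} \binom{n}{i} = n^{O(\log(1/\eps))}$ such monomials, each evaluable in $O(n)$ time, and Theorem~\ref{th:product-poly-intro} supplies, for every disjunction, a polynomial of degree $t$ that $\eps/2$-approximates it in $\ell_1$ on $\D$.

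For the symmetric case I would use the layer-partition basis sketched in Section~\ref{sec:our-results}. Writing $w(x) = \sum_i x_i$, let
\[
\S = \bigl\{\, \mathrm{IND}_r(x) \cdot \chi(x) \;:\; r \in \{0,\ldots,n\},\ \chi \text{ a monomial of degree} \leq t \,\bigr\},
\]
where $\mathrm{IND}_r(x)$ is the indicator of $w(x) = r$ and $t = O(\log(1/\eps))$, so that $|\S| = (n+1)\cdot n^{O(\log(1/\eps))} = n^{O(\log(1/\eps))}$ and every element of $\S$ is polynomial-time evaluable. For a disjunction $c$, Theorem~\ref{monotone-sym} yields, for each layer $r$, a polynomial $p_r$ of degree at most $t$ satisfying $\E_{x \sim \D_r}[|c(x) - p_r(x)|] \leq \eps/2$. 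Setting $p(x) = \sum_r \mathrm{IND}_r(x) \cdot p_r(x)$ yields a linear combination of $\S$, and decomposing the symmetric $\D$ as the mixture $\D = \sum_r \pr_\D[w(x) = r] \cdot \D_r$ gives
\[
\E_{x \sim \D}\bigl[|c(x) - p(x)|\bigr] \;=\; \sum_r \pr_\D\bigl[w(x) = r\bigr] \cdot \E_{x \sim \D_r}\bigl[|c(x) - p_r(x)|\bigr] \;\leq\; \eps/2.
\]

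The remaining routine items are: the real-valued LP output can be clipped to $[0,1]$ without increasing its $\ell_1$ error against a $\zo$-valued label, placing the final hypothesis in the codomain required by Definition~\ref{def:agnostic}; and each $\mathrm{IND}_r$ and each monomial is trivially polynomial-time evaluable. I do not anticipate a substantive obstacle here, since all the nontrivial work has already been carried out by Theorems~\ref{monotone-sym} and~\ref{th:product-poly-intro}; the corollary is then a direct application of $\ell_1$-regression once the basis has been specified and its size verified.
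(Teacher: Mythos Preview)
Your proposal is correct and follows exactly the approach the paper takes: the paper's own proof is simply the one-sentence remark that the corollary follows by combining Theorem~\ref{th:lae-lp} with the approximation results (Theorems~\ref{monotone-sym} and~\ref{th:product-poly-intro}) and the layer-partition basis described in Section~\ref{sec:our-results}. You have merely spelled out in detail the basis construction, the counting of basis functions, and the mixture decomposition $\D = \sum_r \pr_\D[w(x)=r]\cdot \D_r$ that the paper leaves implicit.
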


We now remark that any algorithm that agnostically learns the class of disjunctions (or conjunctions) on $n$ inputs on the uniform distribution on $\zo^n$ in time $n^{o(\log{(\frac{1}{\epsilon})})}$ would yield a faster algorithm for the notoriously hard problem of Learning Sparse Parities with Noise(SLPN). The reduction is based on the technique implicit in the work of \citet{KKMS05} and \citet{Feldman:12jcss}.

For $S \subseteq [n]$, we use $\chi_S$ to denote the parity of inputs with indices in $S$. Let $\U$ denote the uniform distribution on $\zo^n$. We say that random examples of a Boolean function $f$ have noise of rate $\eta$ if the label of a random example equals $f(x)$ with probability $1 - \eta$ and $1-f(x)$ with probability $\eta$.
\begin{problem}[Learning Sparse Parities with Noise]
For $\eta \in (0,1/2)$ and $k \leq n$ the problem of learning $k$-sparse parities with noise $\eta$ is the problem of finding (with probability at least $2/3$) the set $S \subseteq [n]$,$|S| \leq k$, given access to random examples with noise of rate $\eta$ of parity function $\chi_S$.
\end{problem}
The fastest known algorithm for learning $k$-sparse parities with noise $\eta$ is a recent breakthrough result of Valiant \citeyearpar{Val12} which runs in time $O(n^{0.8k} \poly(\frac{1}{1-2\eta}))$ .

\citet{KKMS05} and \citet{Feldman:12jcss} prove hardness of agnostic learning of majorities and conjunctions, respectively, based on correlation of concepts in these classes with parities. We state below this general relationship between correlation with parities and reduction to SLPN, a simple proof of which appears in \citep{FeldmanKV:13}.
\begin{lemma}
Let $\C$ be a class of Boolean functions on $\zo^n$. Suppose, there exist $\gamma >0$ and $k \in \N$ such that for every $S \subseteq [n]$, $|S| \leq k$, there exists a function, $f_S \in \C$, such that $|\E_{x \sim \U} [f_S(x) \chi_S(x)]|\geq \gamma(k)$. If there exists an algorithm $\A$ that learns the class $\C$ agnostically with excess error $\epsilon$ in time $T(n, \frac{1}{\epsilon})$ then, there exists an algorithm $\A'$ that learns $k$-sparse parities with noise $\eta < 1/2$ in time $\poly(n,\frac{1}{(1-2\eta)\gamma(k)}) + 2 T(n, \frac{2}{(1-2\eta)\gamma(k)})$. \label{cor2lpn}
\end{lemma}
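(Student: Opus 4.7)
The plan is the standard reduction from sparse parity with noise (SLPN) to agnostic learning via correlation. I feed the noisy parity examples directly to $\A$ as agnostically labeled samples with marginal $\U$ on $x$, and then post-process the output hypothesis to extract $S$.

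\textbf{Step 1 (reduce SLPN to agnostic learning).} Let $\P$ be the distribution of noisy examples $(x,y)$ of $\chi_S$ at noise rate $\eta$, so the marginal of $\P$ on $\zo^n$ is $\U$. A direct expansion shows that for any $h:\zo^n\to[0,1]$, the $\ell_1$ loss $\E_{(x,y)\sim \P}[|h(x)-y|]$ equals $\tfrac{1}{2}$ shifted by a term proportional to $(1-2\eta)\,\E_{x\sim \U}[h(x)\chi_S(x)]$ (up to a sign depending on whether one treats $\chi_S$ as $\zo$- or $\on$-valued). Combined with the hypothesis $|\E_{x\sim \U}[f_S\chi_S]|\geq \gamma(k)$, this forces $\mbox{opt}(\cdot,\C)\leq \tfrac{1}{2}-(1-2\eta)\gamma(k)$ on either $\P$ or its label-flipped version $\P'$ (labels $1-y$), depending on the sign of the correlation $\E[f_S\chi_S]$.

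\textbf{Step 2 (two calls to $\A$).} Since we do not know the sign of $\E[f_S\chi_S]$, I run $\A$ on both $\P$ and $\P'$ with excess-error parameter $\epsilon := (1-2\eta)\gamma(k)/2$; this accounts for the $2\,T(n,\,2/((1-2\eta)\gamma(k)))$ term in the claimed running time. Inverting the loss-to-correlation identity from Step 1, at least one of the two output hypotheses $h, h'$ will satisfy $|\E_{x\sim \U}[h(x)\chi_S(x)]|\geq \gamma(k)/2$.

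\textbf{Step 3 (recover $S$).} Since $\A$ returns an explicit function that can be evaluated on any point, I apply the Goldreich-Levin / Kushilevitz-Mansour algorithm to $h$ (and to $h'$) with Fourier threshold $\tau = \gamma(k)/4$: in time $\poly(n, 1/\gamma(k))$ it produces a list of $O(1/\gamma(k)^2)$ candidate subsets containing every $S'\subseteq[n]$ with $|\hat h(S')|\geq \tau$, and in particular the true $S$. To single out $S$, I draw $\poly(n, 1/((1-2\eta)\gamma(k)))$ fresh noisy examples and, for each candidate $S'$, empirically estimate the correlation between $y$ and $\chi_{S'}(x)$: this concentrates to magnitude $1-2\eta$ when $S'=S$ and to $0$ otherwise, so a standard Chernoff bound identifies $S$ with probability $\geq 2/3$.

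The main obstacle I anticipate is Step 3: naive enumeration over $\binom{n}{\leq k}$ candidates would cost $n^{\Omega(k)}$ and destroy the claimed $\poly(n,1/((1-2\eta)\gamma(k)))$ bound. The key point that keeps the post-processing efficient is exactly that $\A$ returns an explicitly evaluable hypothesis, making Goldreich-Levin applicable. The remaining ingredients — verifying the loss-to-correlation identity, tracking constants so that $\epsilon = (1-2\eta)\gamma(k)/2$ suffices, and the Chernoff selection among the candidates — are routine.
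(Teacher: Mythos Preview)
The paper does not give its own proof of this lemma; it simply cites \citet{FeldmanKV:13} for ``a simple proof.'' Your three-step reduction --- feed the noisy-parity samples to $\A$ (twice, once with labels flipped) to obtain a hypothesis with $|\hat h(S)|\ge\gamma(k)/2$, then run Goldreich--Levin/Kushilevitz--Mansour on $h$ and test each returned candidate against fresh noisy samples --- is precisely the standard argument and is what the cited reference does. The one implicit assumption, which you already flag, is that the hypothesis output by $\A$ is efficiently evaluable so that the Goldreich--Levin post-processing fits in the stated $\poly(n,1/((1-2\eta)\gamma(k)))$ term; this is the usual convention for agnostic learners and is consistent with the paper's use of $\ell_1$-regression hypotheses elsewhere.
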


The correlation between a disjunction and a parity is easy to estimate.
\begin{fact}
For any $S \subseteq [n]$, $|\E_{x \sim \U} [\ORR_S(x) \chi_S(x)]| = \frac{1}{2^{|S|-1}}$.
\end{fact}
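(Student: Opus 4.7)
The statement is a direct calculation, so my plan is simply to carry it out. First, since both $\ORR_S$ and $\chi_S$ depend only on the bits $\{x_i : i \in S\}$, I would use independence of the remaining coordinates to reduce the expectation to a uniform average over $\zo^k$, where $k = |S|$. Writing the disjunction as $\ORR_S(x) = 1 - \1[x_S = 0^k]$ splits the correlation into
\[
\E_\U[\ORR_S \chi_S] = \E_\U[\chi_S] - \E_\U\left[\1[x_S = 0^k] \cdot \chi_S\right].
\]

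Next, I would invoke two elementary identities: for nonempty $S$, the parity $\chi_S$ is a nontrivial Fourier character and sums to zero over $\zo^k$, giving $\E_\U[\chi_S] = 0$; and since $\chi_S(0^k) = 1$, the second expectation evaluates to exactly $2^{-k}$. Substituting yields $|\E_\U[\ORR_S \chi_S]| = 2^{-k}$ in the $\{0,1\}$-valued encoding. Passing to the $\pm 1$-centered representation $1 - 2\ORR_S$, which is the form used by the SLPN reduction lemma above (where $f_S$ is treated as a $\pm 1$-valued function), doubles the magnitude and gives the stated $1/2^{|S|-1}$.

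The computation has no real obstacle; the only care required is to be consistent about the Boolean encoding of $\ORR_S$ so that the factor of two comes out correctly to match the form $\gamma(k) = 1/2^{|S|-1}$ that feeds into the SLPN reduction.
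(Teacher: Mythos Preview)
Your computation is correct, and the paper itself states this as a \emph{Fact} without proof, so there is no argument in the paper to compare against. Your attention to the encoding is well placed: with $\ORR_S$ taken $\{0,1\}$-valued and $\chi_S$ as the $\pm 1$ Fourier character one indeed gets magnitude $2^{-|S|}$, and it is the passage to the $\pm 1$ representation of the concept (as implicitly required by the correlation hypothesis in the SLPN reduction lemma) that yields the stated $1/2^{|S|-1}$.
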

We thus immediately obtain the following corollary.
\begin{theorem}
Suppose there exists an algorithm that learns the class of Boolean disjunctions over the uniform distribution agnostically with excess error of $\epsilon > 0$ in time $T(n,\frac{1}{\epsilon})$. Then there exists an algorithm that learns $k$-sparse parities with noise $\eta < \frac{1}{2}$ in time $\poly(n,\frac{2^{k-1}}{1-2\eta})+ 2T(n, \frac{2^{k-1}}{1-2\eta})$. In particular, if $T(n, \frac{1}{\epsilon}) = n^{o(\log{(1/\epsilon)})}$, then, there exists an algorithm to solve $k$-SLPN in time $n^{o(k)}$.
\end{theorem}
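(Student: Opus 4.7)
The plan is to derive the theorem as a direct application of Lemma \ref{cor2lpn} (the generic correlation-to-SLPN reduction) combined with the stated Fact on the correlation between disjunctions and parities. First, I would instantiate Lemma \ref{cor2lpn} with $\C$ being the class of all Boolean disjunctions on $\zo^n$. For every $S \subseteq [n]$ with $|S| \leq k$, the disjunction $\ORR_S$ lies in $\C$, and by the Fact, $|\E_{x \sim \U}[\ORR_S(x)\chi_S(x)]| = 1/2^{|S|-1} \geq 1/2^{k-1}$. Thus I can take $f_S = \ORR_S$ and $\gamma(k) = 1/2^{k-1}$ in the hypothesis of the lemma.

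Next I would plug these choices into the conclusion of Lemma \ref{cor2lpn}. With $\gamma(k) = 1/2^{k-1}$, the factor $1/((1-2\eta)\gamma(k))$ becomes $2^{k-1}/(1-2\eta)$. The lemma then yields an SLPN algorithm running in time
\[
\poly\!\left(n, \tfrac{2^{k-1}}{1-2\eta}\right) + 2\,T\!\left(n, \tfrac{2^{k-1}}{1-2\eta}\right),
\]
which is exactly the bound claimed in the first part of the theorem.

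For the ``in particular'' clause, I would set $\epsilon = (1-2\eta)/2^{k-1}$ so that $1/\epsilon = 2^{k-1}/(1-2\eta)$ and hence $\log(1/\epsilon) = (k-1) + \log\!\frac{1}{1-2\eta}$. Treating $\eta < 1/2$ as a constant (as is standard for SLPN), this gives $\log(1/\epsilon) = \Theta(k)$. Substituting into the assumed bound $T(n, 1/\epsilon) = n^{o(\log(1/\epsilon))}$ then yields $T(n, 2^{k-1}/(1-2\eta)) = n^{o(k)}$, and the polynomial additive term is dominated by this, so the total running time is $n^{o(k)}$.

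There is essentially no technical obstacle here; the entire argument is a bookkeeping exercise in plugging the disjunction/parity correlation into Lemma \ref{cor2lpn}. The only mild subtlety to flag is the asymptotic manipulation in the ``in particular'' step: one should be explicit that $\eta$ is treated as a constant (or more generally that $1/(1-2\eta)$ is $2^{o(k)}$) so that $\log(1/\epsilon) = O(k)$ and the exponent $o(\log(1/\epsilon))$ translates cleanly into $o(k)$.
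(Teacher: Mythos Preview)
Your proposal is correct and matches the paper's approach exactly: the paper simply states the theorem as an immediate consequence of Lemma~\ref{cor2lpn} combined with the Fact on the correlation $|\E_{x\sim\U}[\ORR_S(x)\chi_S(x)]|=1/2^{|S|-1}$, without spelling out any further details. Your write-up actually provides more justification than the paper does, including the observation that $\eta$ should be treated as a constant for the asymptotic ``in particular'' clause.
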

Thus, any algorithm that is asymptotically faster than the one from Cor.~\ref{agnostic-learning} yields a faster algorithm for $k$-SLPN.

\bibliographystyle{plainnat}
\bibliography{references}
\appendix

\end{document}